\documentclass{article}
\usepackage{iclr2027_conference,times}

\usepackage{amsmath,amsfonts,bm}

\def\eqref#1{equation~\ref{#1}}

\def\1{\bm{1}}

\DeclareMathAlphabet{\mathsfit}{\encodingdefault}{\sfdefault}{m}{sl}
\SetMathAlphabet{\mathsfit}{bold}{\encodingdefault}{\sfdefault}{bx}{n}

\usepackage{amsmath,amssymb,amsthm}
\usepackage{array}
\usepackage{booktabs,longtable}
\usepackage{multirow}
\usepackage{float,placeins}
\usepackage{xcolor}
\usepackage{tikz}
\usetikzlibrary{positioning,arrows.meta,decorations.pathreplacing}
\usepackage{hyperref}
\usepackage{url}
\usepackage{microtype}

\definecolor{fgBlue}{HTML}{377EB8}
\definecolor{fgOrange}{HTML}{E69F00}
\definecolor{fgGreen}{HTML}{009E73}
\definecolor{fgPurple}{HTML}{7B3294}

\newtheorem{proposition}{Proposition}
\newcommand{\method}{\textsc{ArcFlow}}

\newcommand{\ScaleTasks}{21}

\newcommand{\ScaleTrainNFE}{4}
\newcommand{\ScaleOuterIterations}{64}
\newcommand{\ScaleInteractions}{$62.9$~million}

\newcommand{\TaskCountEligible}{30}
\newcommand{\TaskCountBestOne}{25}

\newcommand{\TaskCountDMOne}{18}
\newcommand{\TaskCountPandaOne}{3}
\newcommand{\TaskCountLocoOne}{4}

\newcommand{\TableThreeBCOne}{$56.2$}
\newcommand{\TableThreeBCTwo}{$56.2$}
\newcommand{\TableThreeBCFour}{$59.7$}
\newcommand{\TableThreeBCEight}{$65.0$}
\newcommand{\TableThreeBCSixteen}{$70.6$}
\newcommand{\TableThreeBCWorst}{$56.2$}
\newcommand{\TableThreeBCMean}{$61.5$}
\newcommand{\TableThreeReinFlowOne}{$1.5$}
\newcommand{\TableThreeReinFlowTwo}{$71.7$}
\newcommand{\TableThreeReinFlowFour}{$92.5$}
\newcommand{\TableThreeReinFlowEight}{$92.4$}
\newcommand{\TableThreeReinFlowSixteen}{$93.8$}
\newcommand{\TableThreeReinFlowWorst}{$1.5$}
\newcommand{\TableThreeReinFlowMean}{$70.4$}
\newcommand{\TableThreeOneStepOne}{$98.0$}
\newcommand{\TableThreeOneStepTwo}{$83.3$}
\newcommand{\TableThreeOneStepFour}{$78.3$}
\newcommand{\TableThreeOneStepEight}{$74.6$}
\newcommand{\TableThreeOneStepSixteen}{$83.9$}
\newcommand{\TableThreeOneStepWorst}{$74.6$}
\newcommand{\TableThreeOneStepMean}{$83.6$}
\newcommand{\TableThreeCyclingOne}{$85.3$}
\newcommand{\TableThreeCyclingTwo}{$96.7$}
\newcommand{\TableThreeCyclingFour}{$98.2$}
\newcommand{\TableThreeCyclingEight}{$78.3$}
\newcommand{\TableThreeCyclingSixteen}{$88.9$}
\newcommand{\TableThreeCyclingWorst}{$78.3$}
\newcommand{\TableThreeCyclingMean}{$89.5$}
\newcommand{\TableThreeArcFlowOne}{$91.1$}
\newcommand{\TableThreeArcFlowTwo}{$95.3$}
\newcommand{\TableThreeArcFlowFour}{$93.5$}
\newcommand{\TableThreeArcFlowEight}{$96.3$}
\newcommand{\TableThreeArcFlowSixteen}{$97.1$}
\newcommand{\TableThreeArcFlowWorst}{$91.1$}
\newcommand{\TableThreeArcFlowMean}{$94.7$}

\floatstyle{ruled}
\newfloat{algorithm}{t}{loa}
\floatname{algorithm}{Algorithm}
\newcounter{algline}
\newenvironment{pseudocode}
  {\begin{list}{\textup{\scriptsize\arabic{algline}:}}%
     {\usecounter{algline}%
      \setlength{\leftmargin}{2.0em}\setlength{\labelwidth}{1.4em}%
      \setlength{\labelsep}{0.6em}\setlength{\itemsep}{0.6pt}%
      \setlength{\topsep}{2pt}\setlength{\partopsep}{0pt}%
      \setlength{\parsep}{0pt}\setlength{\rightmargin}{0pt}}}
  {\end{list}}
\newcommand{\algin}{\hspace*{1.3em}}
\title{\method: Training Flow Policies for Multiple Solver Budgets}

\iclrfinalcopy
\author{Hieu Tran$^{1}$\quad Hong Yu$^{1,2}$\\
\textnormal{$^{1}$University of Massachusetts Amherst\quad $^{2}$University of Massachusetts Lowell}\\
\textnormal{\texttt{hieutran@umass.edu}}}

\begin{document}
\renewcommand{\topfraction}{0.92}
\renewcommand{\bottomfraction}{0.75}
\renewcommand{\textfraction}{0.06}
\renewcommand{\floatpagefraction}{0.75}
\setcounter{topnumber}{4}
\setcounter{bottomnumber}{2}
\setcounter{totalnumber}{6}
\maketitle
\lhead{Preprint}

\begin{abstract}
Flow-matching policies adjust inference cost through the number of solver
steps, but reinforcement learning at a single fixed solver budget can hide
degradation at cheaper deployment budgets and even erase useful one-step
behavior present in a pretrained policy. We introduce
\textbf{Action--Ratio--Clip Flow (\method{})}, a
deployment-aware algorithm for
training one flow policy across a specified range of solver budgets. \method{}
combines action-aligned supervision, budget exposure, and worst-budget
checkpoint selection. For reconstruction objectives, its bridge aggregates
temporal probes into one surrogate ratio and one PPO-style clip per action,
with a boundary residual exactly equal to deployed one-step reconstruction
error. Cycling exposes the shared field to the deployment range, and selection
optimizes its performance floor. For native velocity objectives, multi-budget
endpoint supervision applies the same principle without replacing the native
update. Both realizations preserve the architecture and deployment-time cost.
Across DM~Control, manipulation, and locomotion, \method{} attains the highest
mean one-step return among five flow-policy baselines on \TaskCountBestOne{} of
\TaskCountEligible{} tasks under each method's reported training
settings. Component studies identify complementary improvements from boundary
supervision, the fixed-grid bridge, budget exposure, and deployment selection.
These results establish
deployment-budget-aware training as an effective route to strong low-step
control with one policy serving multiple inference costs.
\footnote{Code can be accessed at: \url{https://anonymous.4open.science/r/arcflow-868A}}
\end{abstract}

\section{Introduction}
\label{sec:intro}

Flow matching represents expressive action distributions by learning a
velocity field that transports noise to actions
\citep{lipman2023flow,liu2023rectified}. Its solver controls inference
cost: more steps refine integration, while fewer steps reduce network
evaluations, helping robotic policies meet latency budgets.
Consistency and shortcut models enable one- and few-step generation
\citep{song2023consistency,frans2025shortcut}, creating
an opportunity to retain expressive policies at low deployment cost.

Reinforcement learning (RL) improves policies through interaction. FPO and
FPO++ derive policy-gradient surrogates from flow-matching losses
\citep{mcallister2025fpo,yi2026fpopp}; Diffusion Policy Policy Optimization
(DPPO) applies policy gradients to a stochastic diffusion denoising chain
\citep{ren2025dppo}; ReinFlow adds learnable exploration noise to flow policies
\citep{zhang2025reinflow}; and PolicyFlow approximates importance ratios from
velocity-field changes along interpolation paths
\citep{yang2026policyflow}. At deployment, solver steps determine latency and
may change the action of the same trained field.
High return at the training budget gives no assurance of high one-step return;
RL can even erode low-step behavior already present in the prior.
This gap motivates us to train and select one field across its deployment
range (Figure~\ref{fig:solver-bridge-overview}).

\begin{figure}[t]
\centering
\includegraphics[width=\textwidth]{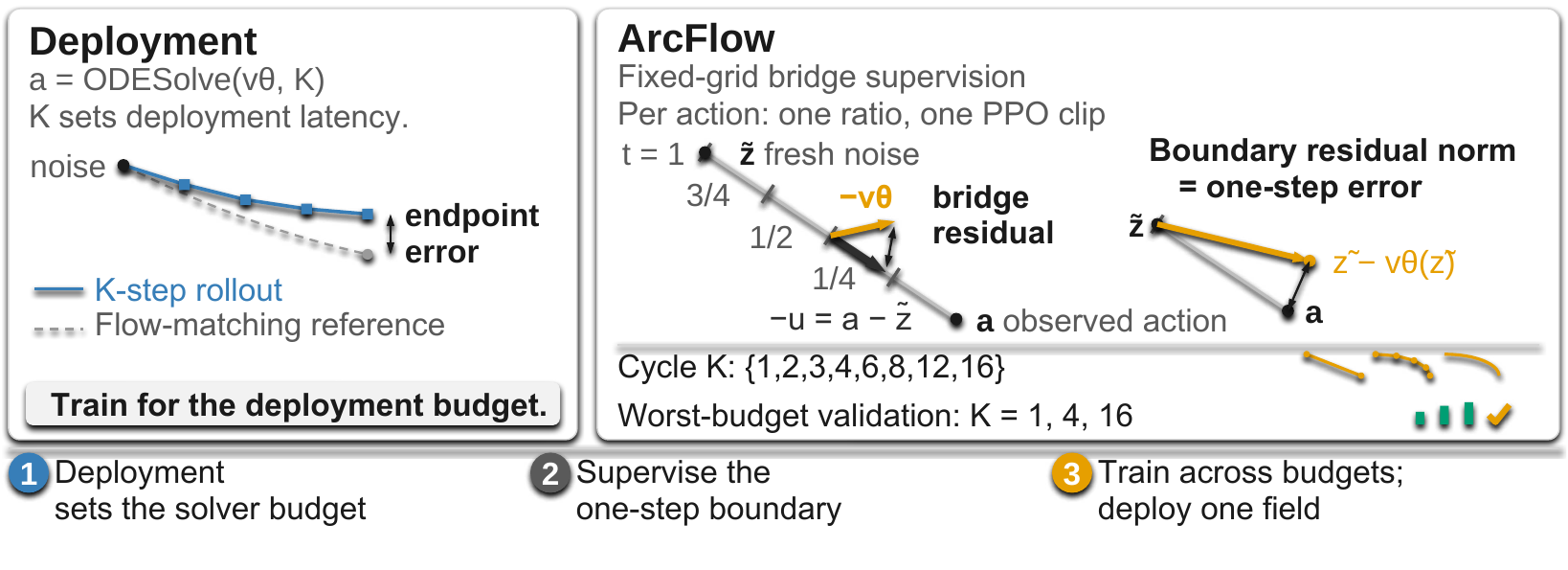}
\caption{\textbf{\method{} for multi-budget deployment.}
Left: the solver budget $K$ controls deployment cost.
Right: the bridge-based control realization uses fresh noise and the observed
action;
at the boundary, the residual norm equals one-step reconstruction error.
Probes are averaged before one surrogate ratio and PPO clip per action.
Rollout budgets cycle, and checkpoint selection maximizes the worst
validation return over $K\in\{1,4,16\}$.}
\label{fig:solver-bridge-overview}
\end{figure}

Two properties of the training objective and solver discretization help
explain this behavior. First, reconstruction-based objectives may provide
limited supervision at the deployment boundary. In FPO
\citep{mcallister2025fpo}, for example, the noise-reconstruction formulation
weights velocity error by $(1-t)^2$, so the boundary queried by a one-step
solver receives zero weight. Direct velocity supervision instead provides a
boundary-aligned signal. This observation complements prior work on
time-dependent loss weighting in generative modeling
\citep{choi2022p2,hang2023minsnr} and highlights the importance of aligning
the weighting scheme with the deployed action map. Second, because different
solver budgets induce different action maps, updates based on rollouts from a
single budget need not improve all budgets uniformly. We study this effect in
the ReinFlow \citep{zhang2025reinflow} fine-tuning pipeline, where
native velocity supervision is already available.

We introduce \textbf{Action--Ratio--Clip Flow (\method{})}, an RL algorithm
for one flow field across a specified solver-budget range. Its control
realization uses a fixed-grid bridge with an exact one-step
action-reconstruction residual. It aggregates temporal probes into one
energy-based surrogate ratio and one PPO-style clip per action; the ratio is
not the executed policy's likelihood ratio. For native velocity objectives,
multi-budget endpoint losses and an anchored max--min update extend the same
deployment principle without replacing the original objective. Both
realizations retain one field and the original deployment architecture.

Our contributions are:
\begin{itemize}
\item \textbf{A diagnosis tied to deployment.} We expose one-step
degradation hidden by training-budget evaluation and identify how the
supervision target suppresses the relevant boundary signal.
\item \textbf{An algorithm for multiple solver budgets.} We connect an exact
action-reconstruction identity to action-aligned supervision, budget
exposure, and worst-budget checkpoint selection.
\item \textbf{Evidence across objectives and domains.} A controlled
common-stack benchmark and a released manipulation pipeline test the
algorithm, with component studies and deployment-cost accounting identifying
the sources of improvement. Across DM~Control, manipulation, and locomotion, \method{} leads five
flow-policy baselines in mean one-step return on \TaskCountBestOne{} of
\TaskCountEligible{} tasks including cross-regime manipulation comparisons.

\end{itemize}

\section{Problem formulation}
\label{sec:problem}

Let $s$ denote an environment state, $z\sim p_0$ source noise, and
$v_\theta(x,t\mid s)$ a velocity field with reverse-time convention
$t:1\to0$. A numerical solver $S_K$ with $K$ steps induces the policy
$a=F^{S_K}_\theta(z,s)$. The environment observes the resulting action;
the solver's internal queries do not constitute environment transitions.
Write $J_K(\theta)$ for the expected episodic return of this policy.
Our objective is to learn one field that remains useful across a specified
set of deployment budgets, without supplying $K$ as an input to the field.

Flow-policy algorithms use several tractable training signals, including
flow-matching residuals, velocity-based surrogates, and likelihoods of
stochastic internal transitions. These signals
differ in their treatment of solver time. An improvement in a training
surrogate, or in $J_{K_{\rm train}}$, does not in itself imply an improvement
in $J_K$ for a different budget. We distinguish three outcomes. \emph{Numerical failure} is a non-finite
update or evaluation. \emph{Cross-budget degradation} is a lower return at
one deployment budget than another. \emph{Temporal degradation} is a fall
in return during training at a fixed budget. The census calls a task a
\emph{one-step collapse} when its specified one-to-sixteen-step return
statistic falls below one half. This threshold is descriptive and depends
on the seed pool and aggregation rule; it is not a
synonym for temporal degradation. We report the graded return ratio
alongside the threshold count.

\section{Solver-budget mismatch}
\label{sec:census}
\label{sec:discovery}

\begin{figure}[t]
\centering
\includegraphics[width=\textwidth]{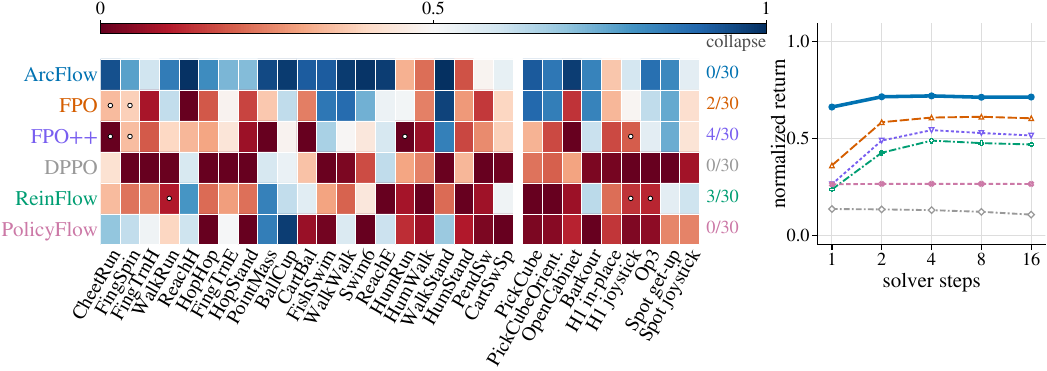}
\caption{\footnotesize \textbf{\method{} strengthens one-step deployment
across the task census and solver budgets.}
(a) Within-task normalized one-step return on all 30 retained tasks from the
registered selection results; dots mark
a one-to-sixteen-step mean-return ratio below $0.5$. The right band
shows nine Playground tasks on the display regimes of Table~\ref{tab:core}.
(b) Mean normalized return versus solver steps over all 30 tasks for every
  method: 21 held-out DM~Control tasks and 9 Playground tasks on their
  registered display regimes.}
\label{fig:census}
\end{figure}

\paragraph{Cross-budget performance.}
Figure~\ref{fig:census} compares all six flow methods on the same 30-task
census. In panel~(a), \method{} records no one-step collapses while maintaining
strong one-step returns across the DM~Control and Playground tasks. The marked
collapse counts are $0/30$ for \method{}, $2/30$ for FPO, $4/30$ for FPO++,
$0/30$ for DPPO, $3/30$ for ReinFlow, and $0/30$ for PolicyFlow. Panel~(b)
shows that \method{} has the highest mean normalized return at every evaluated
solver budget. Its aggregate return increases from one to two steps and then
remains nearly constant through sixteen steps, whereas the other methods
plateau at lower levels; the separation is largest at one-step deployment.

\paragraph{Boundary suppression.}
For a straight bridge $x_t=(1-t)a+t\varepsilon$ and target velocity
$u=\varepsilon-a$, the reference implementation predicts noise as
$\hat\varepsilon_\theta=x_t+(1-t)v_\theta(x_t,t\mid s)$. Thus
\begin{equation}
\|\hat\varepsilon_\theta-\varepsilon\|_{2,d}^2
=(1-t)^2\|v_\theta(x_t,t\mid s)-u\|_{2,d}^2,
\label{eq:eps-weight}
\end{equation}
where $\|r\|_{2,d}^2=d^{-1}\sum_i r_i^2$. Both the loss and its actor
gradient vanish at $t=1$, the only node queried by a one-step reverse Euler
solver. Nearby supervision is also quadratically down-weighted. Velocity
supervision removes this factor. The ablations in
Section~\ref{sec:mechanism} attributes most of the one-step recovery to this
change. The released ReinFlow study complements this target-level
diagnosis by testing deployment-aware training in a pipeline that already
supervises velocity (Table~\ref{tab:manipulation-summary}).

\paragraph{Collapse versus absolute performance.}
The collapse count and the mean return capture different aspects of
solver-budget behavior. DPPO and PolicyFlow have no marked collapses in
Figure~\ref{fig:census}(a), yet their mean returns remain well below
\method{} at every displayed budget in panel~(b). Conversely, a method can
achieve a higher aggregate return while still containing tasks with severe
one-to-sixteen-step degradation. Evaluating low-step return together with
within-policy retention is therefore necessary: the absence of collapse alone
does not imply an effective low-latency policy.

\section{The \method{} algorithm}
\label{sec:method}

Figure~\ref{fig:solver-bridge-overview} previews \method{}: its left panel
contrasts the budget-dependent rollout action with a reference endpoint; its
right panel connects a collected action to fresh noise on a fixed-grid bridge.
The boundary residual measures one-step reconstruction error, while temporal
probes yield one clipped surrogate weight per action. Rollout budgets cycle,
and worst-budget validation selects the checkpoint. The deployment target is
\begin{equation}
\mathcal J(\theta)=\mathbb E_{K\sim p}[J_K(\theta)].
\label{eq:arcflow-goal}
\end{equation}
Algorithm~\ref{alg:arcflow} specifies this control loop. The native-objective
image-policy variant instead uses multi-budget endpoint supervision.

\subsection{Budget-cycled rollouts}
\label{sec:draw}

For $z\sim\mathcal N(0,I)$, a reverse Euler rollout produces
\begin{equation}
x_0=z,\qquad x_{j+1}=x_j-hv_\theta(x_j,t_j\mid s),\qquad
 t_j=1-j/K,\quad h=1/K,\quad a=x_K.
\label{eq:arcflow-rollout}
\end{equation}
Training cycles through $\mathcal S=\{1,2,3,4,6,8,12,16\}$ in increasing
order, using one $K$ for every rollout in an outer iteration. This gives
equal exposure to each supported budget over a cycle. Parameters evolve across
the cycle, and the field is not conditioned on $K$.

\subsection{Boundary-aware bridge}
\label{sec:bridge}
\label{sec:boundary-exposure}

For each collected tuple $(s,a,\widehat A)$, draw fresh noise
$\widetilde z\sim\mathcal N(0,I)$ and define
\begin{equation}
\widetilde x_t=(1-t)a+t\widetilde z,\qquad u=\widetilde z-a.
\label{eq:arcflow-bridge}
\end{equation}
The bridge is an analytic reference path to the observed action, not the
trajectory that generated it. On the fixed reference grid
$G_4=\{1,3/4,1/2,1/4\}$, sample $j$ uniformly and score
\begin{equation}
\ell_j(\theta;s,a,\widetilde z)
=h\|v_\theta(\widetilde x_{t_j},t_j\mid s)-u\|_{2,d}^2,
\qquad h=1/4.
\label{eq:arcflow-energy}
\end{equation}
The mean sampled energy estimates
$L_\theta=\mathbb E_{\widetilde z,j}[\ell_j]$.
For fixed $\widetilde z$, this is $\tfrac14\sum_{j=0}^3\ell_j$,
not the unnormalized sum of grid residuals. The factor matters because it
sets the scale of the exponential weight below. Every collecting solver
uses the same grid and scale.

\begin{proposition}[Exact one-step reconstruction]
\label{prop:bridge-one-step}
For any fixed $(s,a,\widetilde z)$, the boundary residual satisfies
\[
h^{-1}\ell_0(\theta)
=\|\widetilde z-v_\theta(\widetilde z,1\mid s)-a\|_{2,d}^2
=\|F^{S_1}_\theta(\widetilde z,s)-a\|_{2,d}^2.
\]
\end{proposition}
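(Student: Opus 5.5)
The proof is a direct substitution; I will prove the two equalities separately, both by unwinding definitions at the boundary node $t_0=1$ of the grid $G_4$.

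\emph{First equality.} By \eqref{eq:arcflow-energy} with $j=0$, dividing by $h$ gives
\[
h^{-1}\ell_0=\|v_\theta(\widetilde x_{1},1\mid s)-u\|_{2,d}^2 .
\]
From \eqref{eq:arcflow-bridge} at $t=1$ we have $\widetilde x_1=(1-1)a+1\cdot\widetilde z=\widetilde z$, so the field is queried at $(\widetilde z,1\mid s)$. Substituting $u=\widetilde z-a$ gives
\[
v_\theta(\widetilde z,1\mid s)-\widetilde z+a=-\bigl(\widetilde z-v_\theta(\widetilde z,1\mid s)-a\bigr).
\]
Since $\|\cdot\|_{2,d}^2$ is invariant under negation, the first equality follows.

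\emph{Second equality.} Specialize the reverse Euler rollout \eqref{eq:arcflow-rollout} to $K=1$ and source noise $\widetilde z$. Then $h=1$ and $t_0=1$, so the only update is $x_1=\widetilde z-v_\theta(\widetilde z,1\mid s)$, and by definition $F^{S_1}_\theta(\widetilde z,s)=x_1$. Substituting into the middle expression gives $\|F^{S_1}_\theta(\widetilde z,s)-a\|_{2,d}^2$.

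There is no real obstacle; the main thing to get right is bookkeeping. Two different step sizes share the symbol $h$: the bridge scale is $h=1/4$, while the one-step solver uses $h=1/K=1$. The proof should state explicitly that the factor $h^{-1}$ removes only the bridge scale $1/4$. It should also note that the grid node $t_0=1$ coincides with the single node queried by $S_1$, and that the identity holds pointwise for every fixed $(s,a,\widetilde z)$, with no expectation taken.
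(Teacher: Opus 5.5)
Your proof is correct and matches the paper's argument: set $t_0=1$ so the bridge query is $\widetilde z$, substitute $u=\widetilde z-a$, and recognize $\widetilde z-v_\theta(\widetilde z,1\mid s)$ as the one-step reverse Euler endpoint. Your remark that $h^{-1}$ removes only the bridge scale $1/4$, while the one-step solver uses step size $1$, usefully makes explicit a notational overload the paper leaves implicit.
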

The identity follows by substituting $t_0=1$ into
Eqs.~\ref{eq:arcflow-rollout}--\ref{eq:arcflow-energy}.
It links one quarter of the sampled-node distribution to the reconstruction
error of a one-step action. The appendix extends this connection to endpoint
error bounds under regularity assumptions
(Propositions~\ref{prop:bridge-endpoint-bound}
and~\ref{prop:bridge-cross-budget}). The resulting design separates boundary
supervision from the fixed-grid bridge used to score actions at every rollout
budget.

\paragraph{Action-level surrogate.}
Let $\bar\theta$ be the rollout parameters. Cache $M$ common noise--node
probes for the old and new policies and form
\begin{equation}
\widehat\Delta_\theta=\frac1M\sum_{m=1}^M
[\ell_{j_m}(\bar\theta)-\ell_{j_m}(\theta)],\qquad
\widehat w_\theta=\exp\{\operatorname{clip}(\widehat\Delta_\theta,-c_E,c_E)\}.
\label{eq:arcflow-weight}
\end{equation}
We maximize a PPO-style clipped surrogate \citep{schulman2017ppo},
\begin{equation}
\mathcal J_{\rm Arc}(\theta)=\mathbb E\left[
\min\left(\widehat w_\theta\widehat A,
\operatorname{clip}(\widehat w_\theta,1-\epsilon,1+\epsilon)\widehat A\right)
\right].
\label{eq:arcflow-objective}
\end{equation}
The probes are averaged before clipping, so each environment action receives
one weight and one advantage. The weight is an energy-based surrogate, not
the density ratio of the executed solver policy. Since common probes give
$\widehat\Delta_{\bar\theta}=0$, its local gradient is
\begin{equation}
\left.\nabla_\theta\mathcal J_{\rm Arc}\right|_{\bar\theta}
=-\left.\mathbb E[\widehat A\nabla_\theta\widehat L_\theta(s,a)]
\right|_{\bar\theta}.
\label{eq:arcflow-local-gradient}
\end{equation}
Positive advantages therefore decrease reconstruction energy locally.
Appendix Proposition~\ref{prop:normalized-tilt} relates this expression to
an auxiliary normalized exponential-tilt policy under its stated centering
assumptions; that auxiliary policy need not equal the deployed flow policy.

\begin{algorithm}[H]
\caption{\method{}: bridge-based control realization}
\label{alg:arcflow}
\scriptsize
\begin{pseudocode}
\item \textbf{Require:} parameters $\theta$, support $\mathcal S$, reference grid $G_4$,
validation bank $\mathcal B_{\rm val}$
\item Evaluate $\theta$ at $\mathcal K=\{1,4,16\}$ on $\mathcal B_{\rm val}$.
\item $\theta^\star\leftarrow\theta$;
$s^\star\leftarrow\min_{K\in\mathcal K}\widehat J_K(\theta;\mathcal B_{\rm val})$
\item \textbf{for} each training cycle \textbf{do}
\item \algin \textbf{for} $K\in\mathcal S=\{1,2,3,4,6,8,12,16\}$ in order \textbf{do}
\item \algin\algin Collect $K$-step rollouts and estimate advantages.
\item \algin\algin Cache $M$ noise--node probes per action and old bridge energies.
\item \algin\algin Apply Eqs.~\ref{eq:arcflow-weight}--\ref{eq:arcflow-objective}
with those probes; update the critic with the backbone rule.
\item \algin Evaluate the current policy at $\mathcal K$ on $\mathcal B_{\rm val}$.
\item \algin $s\leftarrow\min_{K\in\mathcal K}\widehat J_K(\theta;\mathcal B_{\rm val})$
\item \algin \textbf{if} $s>s^\star$ \textbf{then} $(\theta^\star,s^\star)\leftarrow(\theta,s)$
\item \textbf{return} $\theta^\star$.
\end{pseudocode}
\end{algorithm}

Algorithm~\ref{alg:arcflow} instantiates the right-hand sequence of
Figure~\ref{fig:solver-bridge-overview}: cycle $K$ for rollout collection,
score actions on fresh fixed-grid probes, and apply one clipped update per
action. After each cycle it retains the checkpoint with the highest minimum
validation return over $\mathcal K=\{1,4,16\}$.
The rollout path and reference bridge have distinct roles: the former chooses
the action that receives an environment advantage, whereas the latter scores
that action against fresh noise. Changing $K$ therefore changes which actions
enter the update without changing the grid or scale of their bridge energies.
Locally, positive advantages lower the energy of collected actions and
negative advantages raise it; the cached probes make the old and current
energies comparable during each actor update.
At every rollout budget, uniform sampling on $G_4$ assigns probability $1/4$
to $t=1$. Thus the exact one-step residual in
Proposition~\ref{prop:bridge-one-step} contributes to the expected actor update
even when rollouts take many steps. Averaging the $M$ probes before exponentiation gives
each collected action one weight, aligned with its one advantage, rather than
separate weights for individual bridge nodes.

\subsection{Training and selection}
\label{sec:certified}
\label{sec:gated-method}
\label{app:official-registration}

The reference grid $G_4$ and its $1/4$ residual scale remain fixed as
the rollout budget changes. Actor updates reuse cached probes, and the critic
follows the backbone rule. Algorithm~\ref{alg:arcflow} evaluates the initial
field and each cycle-end checkpoint at $\mathcal K=\{1,4,16\}$, then selects
\begin{equation}
\theta^\star\in\arg\max_{\theta\in\mathcal C}
\min_{K\in\mathcal K}\widehat J^{\rm val}_K(\theta),
\label{eq:checkpoint-selection}
\end{equation}
where $\mathcal C$ is the set of evaluated checkpoints.
Taking the minimum prevents a high return at one validation budget from
compensating for a low return at another. The selected $\theta^\star$ is one
shared field: deployment changes the solver budget, not the checkpoint or the
policy parameters.
Training and selection have different roles: Eq.~\ref{eq:arcflow-objective}
updates the field through the advantage-weighted bridge surrogate, whereas
Eq.~\ref{eq:checkpoint-selection} ranks the resulting checkpoints by
environment return. No gradient is taken through the validation returns.

\paragraph{Native-objective realization.}
The bridge surrogate is unnecessary when the native learner already provides
velocity supervision. In the released ReinFlow pipeline, \method{} retains
the native actor objective as an anchor, collects at $K=4$, and supervises
deterministic endpoint
views at $\mathcal V=\{1,2,4,8,16\}$ against stored rollout actions,
weighted by normalized advantage midranks within each minibatch.
Let $g_0$ be the native ascent direction and $g_K$ the negative gradient
of the endpoint loss at budget $K$; write $u_0,u_K$ for their unit directions.
The update preserves the native gradient norm while combining its direction
equally with a max--min direction across views:
\begin{equation}
 p^\star\in\arg\max_{\|p\|\le1,\ \langle p,u_0\rangle\ge0}
       \min_{K\in\mathcal V}\langle p,u_K\rangle,
 \qquad d=\|g_0\|\frac{u_0+p^\star}{\|u_0+p^\star\|}.
 \label{eq:manipulation-update-main}
\end{equation}
The native AdamW optimizer proposes a step along $d$. Backtracking accepts
a finite step only if endpoint losses stay within a small tolerance of their
pre-step values and common-noise endpoints remain close across budgets.
This check reuses training samples and makes no extra reward queries.
Appendix~\ref{app:manipulation-algorithm} gives its thresholds, clock views,
fallback, and optimizer-state handling. The separate cycling baseline changes
only the collecting solver; both realizations deploy one field.

\label{sec:not}

\begin{table}[H]
\centering
\caption{\footnotesize \textbf{Cross-domain evaluation at three deployment
budgets.} Deployed episodic return for \method{} and five flow-policy baselines
on 21 DM~Control, three manipulation, and six locomotion tasks. Bold marks the highest displayed mean for
each task and budget.}
\label{tab:core}
\vspace{1pt}
\scriptsize\setlength{\tabcolsep}{0.25pt}%
\renewcommand{\arraystretch}{0.92}
\begin{tabular*}{\textwidth}{@{\extracolsep{\fill}}l
@{\hspace{1.2pt}{\color{black!35}\vrule width 0.25pt}\hspace{1.2pt}}
rrr@{\hspace{1.2pt}{\color{black!35}\vrule width 0.25pt}\hspace{1.2pt}}
rrr@{\hspace{1.2pt}{\color{black!35}\vrule width 0.25pt}\hspace{1.2pt}}
rrr@{\hspace{1.2pt}{\color{black!35}\vrule width 0.25pt}\hspace{1.2pt}}
rrr@{\hspace{1.2pt}{\color{black!35}\vrule width 0.25pt}\hspace{1.2pt}}
rrr@{\hspace{1.2pt}{\color{black!35}\vrule width 0.25pt}\hspace{1.2pt}}
rrr@{}}
\toprule
Task
& \multicolumn{3}{c@{\hspace{1.2pt}{\color{black!35}\vrule width 0.25pt}\hspace{1.2pt}}}{FPO}
& \multicolumn{3}{c@{\hspace{1.2pt}{\color{black!35}\vrule width 0.25pt}\hspace{1.2pt}}}{FPO++}
& \multicolumn{3}{c@{\hspace{1.2pt}{\color{black!35}\vrule width 0.25pt}\hspace{1.2pt}}}{DPPO}
& \multicolumn{3}{c@{\hspace{1.2pt}{\color{black!35}\vrule width 0.25pt}\hspace{1.2pt}}}{ReinFlow}
& \multicolumn{3}{c@{\hspace{1.2pt}{\color{black!35}\vrule width 0.25pt}\hspace{1.2pt}}}{PolicyFlow}
& \multicolumn{3}{c}{\method{}} \\
\cmidrule(lr){2-4}\cmidrule(lr){5-7}\cmidrule(lr){8-10}\cmidrule(lr){11-13}\cmidrule(lr){14-16}\cmidrule(lr){17-19}
 $K$ & 1 & 4 & 16 & 1 & 4 & 16 & 1 & 4 & 16 & 1 & 4 & 16 & 1 & 4 & 16 & 1 & 4 & 16 \\
\midrule
\multicolumn{19}{c}{\textbf{DM Control }} \\
BallInCup & 738 & 856 & 856 & 656 & 852 & 841 & 683 & 647 & 607 & 735 & 850 & 850 & 935 & 937 & 935 & \textbf{937} & \textbf{949} & \textbf{949} \\
CartpoleBalance & 611 & 695 & 732 & 487 & 793 & 816 & 668 & 659 & 631 & 772 & \textbf{999} & 891 & 522 & 496 & 492 & \textbf{956} & 959 & \textbf{953} \\
CartpoleSwingupSparse & 207 & \textbf{522} & \textbf{495} & 201 & 421 & 423 & 12.1 & 12.3 & 11.2 & 274 & 286 & 263 & 4.53 & 4.94 & 4.67 & \textbf{284} & 300 & 299 \\
CheetahRun & 409 & \textbf{894} & \textbf{895} & 156 & 827 & 813 & 469 & 457 & 437 & 410 & 728 & 706 & 672 & 677 & 677 & \textbf{849} & 868 & 859 \\
FingerSpin & 299 & \textbf{670} & \textbf{666} & 276 & 588 & 571 & 54.2 & 55.0 & 51.7 & 200 & 407 & 317 & 457 & 477 & 477 & \textbf{554} & 612 & 585 \\
FingerTurnEasy & 710 & \textbf{894} & \textbf{901} & 689 & 758 & 755 & 498 & 503 & 508 & 626 & 743 & 738 & 722 & 728 & 725 & \textbf{820} & 870 & 860 \\
FingerTurnHard & 430 & 755 & \textbf{760} & 487 & 656 & 647 & 383 & 401 & 398 & 517 & 629 & 617 & 663 & 671 & 667 & \textbf{705} & \textbf{758} & 751 \\
FishSwim & 575 & \textbf{638} & \textbf{635} & 485 & 588 & 580 & 187 & 189 & 185 & 320 & 500 & 497 & 193 & 195 & 196 & \textbf{600} & 620 & 616 \\
HopperHop & 14.4 & 32.2 & 30.6 & 22.8 & 36.6 & 33.0 & 0.30 & 0.30 & 0.30 & 16.4 & 44.0 & 44.0 & 0.44 & 0.41 & 0.42 & \textbf{60.4} & \textbf{74.1} & \textbf{73.2} \\
HopperStand & 85.2 & 108 & 110 & 39.4 & 75.1 & 70.9 & 4.92 & 4.89 & 4.74 & 115 & 243 & 234 & 4.28 & 3.89 & 3.90 & \textbf{361} & \textbf{503} & \textbf{496} \\
HumanoidRun & \textbf{6.98} & \textbf{7.58} & \textbf{7.29} & 1.51 & 5.77 & 4.98 & 4.73 & 4.74 & 4.59 & 2.42 & 3.22 & 2.73 & 2.90 & 2.92 & 2.90 & 4.94 & 5.66 & 5.41 \\
HumanoidStand & 23.8 & 26.0 & 24.3 & 17.7 & \textbf{30.8} & \textbf{24.8} & \textbf{26.5} & 25.7 & 23.8 & 9.06 & 11.4 & 9.66 & 14.2 & 14.2 & 14.2 & 18.5 & 23.0 & 21.5 \\
HumanoidWalk & \textbf{12.2} & \textbf{12.9} & \textbf{13.0} & 7.19 & 12.5 & 9.91 & 12.1 & 12.0 & 11.6 & 5.35 & 6.06 & 5.98 & 7.31 & 8.07 & 8.13 & 11.3 & 12.3 & 11.7 \\
PendulumSwingup & 35.9 & 56.7 & 56.2 & 63.9 & 66.8 & 63.9 & 0.98 & 1.09 & 1.02 & 19.5 & 30.6 & 28.3 & 8.34 & 7.65 & 7.77 & \textbf{116} & \textbf{142} & \textbf{141} \\
PointMass & 715 & 840 & 836 & 610 & 830 & 842 & 777 & 771 & 756 & 849 & 869 & 869 & 851 & 853 & 852 & \textbf{881} & \textbf{885} & \textbf{883} \\
ReacherEasy & 921 & 968 & 968 & 927 & 966 & 963 & 931 & 930 & 923 & 866 & 964 & 964 & 929 & 935 & 934 & \textbf{968} & \textbf{970} & \textbf{969} \\
ReacherHard & 675 & \textbf{958} & \textbf{956} & 770 & 956 & 954 & 840 & 834 & 808 & 899 & 945 & 942 & 846 & 849 & 849 & \textbf{956} & 955 & 954 \\
Swimmer6 & 508 & 545 & 544 & 409 & 444 & 434 & 319 & 321 & 314 & 409 & 478 & 482 & 261 & 261 & 261 & \textbf{591} & \textbf{595} & \textbf{591} \\
WalkerRun & 416 & \textbf{620} & \textbf{614} & 292 & 484 & 475 & 80.6 & 78.8 & 77.1 & 130 & 312 & 308 & 290 & 289 & 289 & \textbf{537} & 567 & 563 \\
WalkerStand & 949 & 965 & 964 & 871 & 939 & 938 & 344 & 343 & 340 & 477 & 638 & 638 & 773 & 779 & 781 & \textbf{970} & \textbf{972} & \textbf{971} \\
WalkerWalk & 877 & \textbf{955} & \textbf{956} & 606 & 839 & 825 & 286 & 274 & 265 & 408 & 734 & 721 & 660 & 662 & 663 & \textbf{942} & 950 & 949 \\
\midrule
\multicolumn{19}{c}{\textbf{Manipulation }} \\
PandaPickCube & 700 & 750 & 748 & 332 & 392 & 402 & 267 & 265 & 258 & 119 & 130 & 132 & 138 & 151 & 158 & \textbf{719} & \textbf{769} & \textbf{754} \\
PandaPickCubeOrient & 803 & 863 & 860 & 250 & 295 & 303 & 273 & 271 & 264 & 109 & 115 & 115 & 145 & 165 & 179 & \textbf{817} & \textbf{931} & \textbf{919} \\
PandaOpenCabinet & 131 & 164 & 167 & 50.7 & 58.0 & 57.8 & 225 & 224 & 220 & 92.2 & 94.9 & 93.8 & 118 & 131 & 133 & \textbf{618} & \textbf{632} & \textbf{617} \\
\midrule
\multicolumn{19}{c}{\textbf{Locomotion }} \\
BarkourJoystick & \textbf{6.06} & 6.59 & 6.50 & 4.58 & \textbf{7.07} & 6.83 & 0.72 & 0.70 & 0.61 & 4.72 & 7.02 & 6.96 & 0.53 & 0.54 & 0.56 & 6.04 & 6.22 & \textbf{7.15} \\
H1InplaceGait & 13.5 & 15.5 & 15.5 & 11.1 & 18.0 & 17.9 & 8.84 & 8.77 & 8.52 & 11.8 & \textbf{23.2} & \textbf{23.4} & 10.6 & 10.8 & 10.9 & \textbf{13.9} & 14.6 & 19.6 \\
H1JoystickGait & 2.30 & 3.28 & 3.30 & 0.96 & 2.11 & 2.10 & 0.06 & 0.06 & 0.06 & 0.69 & \textbf{4.61} & \textbf{4.54} & 0.21 & 0.22 & 0.22 & \textbf{2.73} & 3.48 & 4.22 \\
Op3Joystick & 7.54 & 10.8 & 11.0 & 6.68 & 11.1 & 10.6 & 0.20 & 0.20 & 0.17 & 1.58 & 11.8 & 11.8 & 0.79 & 0.85 & 0.89 & \textbf{10.5} & \textbf{11.9} & \textbf{11.9} \\
SpotGetup & 7.23 & 7.55 & 7.53 & 7.16 & \textbf{8.83} & \textbf{8.81} & 2.14 & 2.13 & 2.12 & 5.86 & 8.25 & 8.22 & 3.86 & 4.58 & 4.66 & \textbf{7.66} & 8.78 & 8.65 \\
SpotJoystickGait & 22.9 & 28.0 & 28.0 & 23.2 & 29.7 & 29.3 & 16.5 & 16.4 & 16.1 & \textbf{26.5} & \textbf{34.0} & \textbf{33.9} & 19.8 & 15.8 & 15.0 & 25.5 & 29.5 & 29.2 \\
\bottomrule
\end{tabular*}
\end{table}

\section{Experiments}
\label{sec:experiments}

\subsection{Experimental setup}
\label{sec:setup}

\paragraph{Common-stack benchmark.}
We instantiate all methods in a control stack derived from the released FPO
Playground configuration. It covers \ScaleTasks{} DM~Control tasks
\citep{tassa2018dmcontrol}, \ScaleOuterIterations{} outer iterations, and
\ScaleInteractions{} interactions; fixed-budget arms collect at
\ScaleTrainNFE{} steps. We compare FPO \citep{mcallister2025fpo}, FPO++
\citep{yi2026fpopp}, DPPO \citep{ren2025dppo}, ReinFlow
\citep{zhang2025reinflow}, and PolicyFlow \citep{yang2026policyflow}. All arms
share the normalizer, architecture, initialization, critic, and training-interaction
budget. The DPPO row applies its denoising-MDP policy-gradient objective to
the shared flow actor, not the original diffusion actor; Appendix
\ref{app:baseline-audit} specifies the adaptation. Raw returns use each configuration's registered seeds and display
regime; the Panda comparisons are cross-regime. Figure~\ref{fig:census}
min--max normalizes each task over six methods, five budgets, and a Gaussian
reference. ``Collapse'' means $R_1/R_{16}<0.5$. The separate \texttt{can}
study uses five matched training seeds.

\subsection{Multi-budget performance}
\label{sec:census-results}
\label{sec:core}

\paragraph{Broad low-budget gains.}
Table~\ref{tab:core} reports the deployed return of all six methods at three
deployment budgets on 30 tasks. At $K=1$, \method{} has the highest displayed
mean on \TaskCountBestOne{} of \TaskCountEligible{} tasks under the registered
display regimes: \TaskCountDMOne{} of 21 DM~Control tasks, all
\TaskCountPandaOne{} manipulation tasks, and \TaskCountLocoOne{} of six
locomotion tasks. Across the 25 wins, the strongest baseline is FPO on 11
tasks, PolicyFlow on six, ReinFlow on four, and FPO++ and DPPO on two each. The gains are also often substantial. Using the displayed means, 12 of the 25
wins improve on the strongest baseline by at least $10\%$, and nine improve by
at least $20\%$. The largest multiplicative gains occur on HopperStand, where
\method{} reaches $361$ versus $115$ for the strongest baseline; HopperHop,
with $60.4$ versus $22.8$; and PandaOpenCabinet, with $618$ versus $225$. The
advantage is not limited to these outliers: at one step, \method{} also obtains
$956$ versus $772$ on CartpoleBalance, $849$ versus $672$ on CheetahRun, $554$
versus $457$ on FingerSpin, and $10.5$ versus $7.54$ on Op3Joystick. Thus the
lowest-cost setting yields both broad task coverage and pronounced gains on
several of the more difficult control and manipulation problems.

\begin{figure}[H]
\centering
\includegraphics[width=\textwidth]{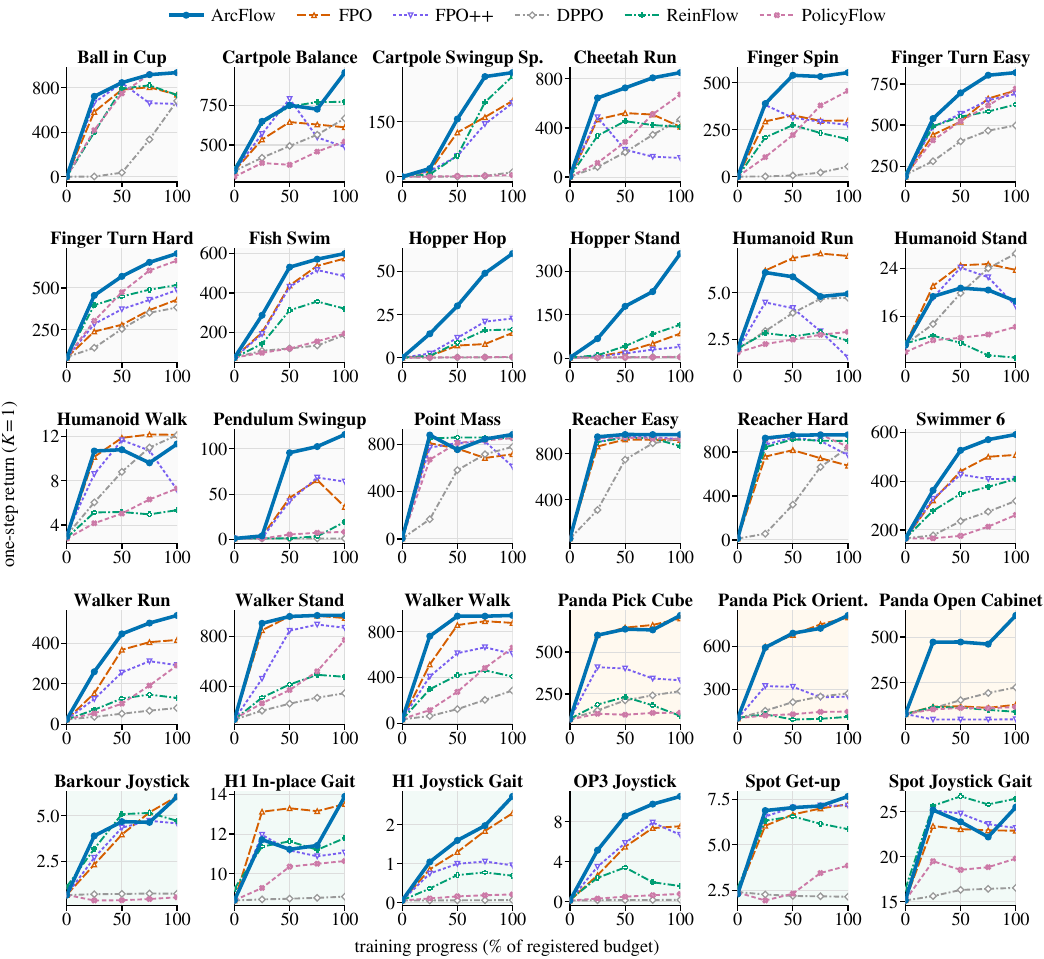}
  \caption{\footnotesize
  \textbf{One-step learning dynamics across 30 tasks.}
  Mean raw episodic return at $K=1$, ordered as in Table~\ref{tab:core}.
  Training progress is the percentage of each method's registered interaction
  budget. Curves use the
  registered seeds. Axis ranges are task-specific; compare within panels.}
  
\label{fig:degradation}
\end{figure}

\paragraph{Domain and budget coverage.}
The manipulation block provides the clearest uniform result: \method{} has the
highest displayed mean at all three budgets on PandaPickCube,
PandaPickCubeOrient, and PandaOpenCabinet. In DM~Control, it is best at all
three budgets on eight tasks, including BallInCup, HopperHop, HopperStand,
PendulumSwingup, PointMass, ReacherEasy, Swimmer6, and WalkerStand; it also
leads all three budgets on Op3Joystick in locomotion. Across the full table,
\method{} is best on 13 tasks at $K=4$ and 14 at $K=16$, and on 12 tasks at
every reported budget. Task-level leaders are therefore more mixed at higher
solver counts than at one step, but the table shows that \method{} combines a
broad low-budget advantage with strong performance as additional computation
becomes available. These task-level counts identify where each method leads;
Figure~\ref{fig:census}(b) instead shows how each trained field performs as
the deployment budget changes, averaged over the 30 tasks. \method{} has the
highest mean normalized return at every evaluated solver count, with the
largest separation at one step. Its mean rises from one to two steps and then
remains nearly constant through sixteen steps. Figure~\ref{fig:census}(a)
likewise records no one-step collapse for \method{} across the 30-task census.
Although the strongest high-budget method varies on the remaining tasks, the
aggregate curve stays above all baselines. Thus the one-step gains do not come
at the cost of losing the policy's higher-budget capability.

\begin{table}[H]
\begin{minipage}[t]{0.49\textwidth}
\centering
\refstepcounter{table}\label{tab:mechanism-main}
\parbox{\linewidth}{\raggedright\footnotesize Table~\thetable:
\textbf{Boundary supervision, bridge training, and worst-$K$ selection improve
deployment robustness.} Point-estimate means on a 25-task component pool.
$K=1$ and Worst $K$ evaluate normalized deployment return, with Worst $K$
summarizing the weakest budget over $\mathcal K$. AUC is the mean four-step training-curve area, min--max normalized
within each task over these rows. Bold: column maximum.}
\vspace{4pt}
\footnotesize\setlength{\tabcolsep}{1.2pt}%
\renewcommand{\arraystretch}{1.13}
\begin{tabular}{@{}lccc@{}}
\toprule
Configuration & $K=1$ & Worst $K$ & AUC \\
\midrule
FPO & $0.350$ & $0.334$ & $0.587$ \\
$+$ $u$-supervision & $0.534$ & $0.508$ & $0.333$ \\
bridge, fixed $K=4$ & $0.612$ & $0.593$ & $\mathbf{0.700}$ \\
bridge + cycle $\{1,4,16\}$ & $0.646$ & $0.635$ & $0.612$ \\
bridge + cycle $\mathcal S$ & $0.668$ & $0.623$ & $0.661$ \\
\textbf{\method{} (+ worst-$K$)} & $\mathbf{0.711}$ & $\mathbf{0.685}$ & $0.667$ \\
\bottomrule
\end{tabular}

\end{minipage}\hfill
\begin{minipage}[t]{0.48\textwidth}
\centering
\refstepcounter{table}\label{tab:manipulation-summary}
\parbox{\linewidth}{\raggedright\footnotesize Table~\thetable:
\textbf{\method{} restores full-range manipulation.} Mean \texttt{can} success
(\%) over five matched training seeds. BC is the behavior-cloned prior;
``1-step'' is the specialist. Worst and mean $K$ summarize the five budget
means. Bold: row maximum.}
\vspace{4pt}
\footnotesize\setlength{\tabcolsep}{1.2pt}
\begin{tabular}{@{}lccccc@{}}
\toprule
Budget & BC & ReinFlow & 1-step & Cycling & \method{} \\
\midrule
$K=1$ & \TableThreeBCOne & \TableThreeReinFlowOne & {\boldmath\TableThreeOneStepOne} & \TableThreeCyclingOne & \TableThreeArcFlowOne \\
$K=2$ & \TableThreeBCTwo & \TableThreeReinFlowTwo & \TableThreeOneStepTwo & {\boldmath\TableThreeCyclingTwo} & \TableThreeArcFlowTwo \\
$K=4$ & \TableThreeBCFour & \TableThreeReinFlowFour & \TableThreeOneStepFour & {\boldmath\TableThreeCyclingFour} & \TableThreeArcFlowFour \\
$K=8$ & \TableThreeBCEight & \TableThreeReinFlowEight & \TableThreeOneStepEight & \TableThreeCyclingEight & {\boldmath\TableThreeArcFlowEight} \\
$K=16$ & \TableThreeBCSixteen & \TableThreeReinFlowSixteen & \TableThreeOneStepSixteen & \TableThreeCyclingSixteen & {\boldmath\TableThreeArcFlowSixteen} \\
\midrule
Worst $K$ & \TableThreeBCWorst & \TableThreeReinFlowWorst & \TableThreeOneStepWorst & \TableThreeCyclingWorst & {\boldmath\TableThreeArcFlowWorst} \\
Mean $K$ & \TableThreeBCMean & \TableThreeReinFlowMean & \TableThreeOneStepMean & \TableThreeCyclingMean & {\boldmath\TableThreeArcFlowMean} \\
\bottomrule
\end{tabular}

\end{minipage}
\end{table}

\subsection{Training dynamics}

\paragraph{Early emergence of low-step gains.}
Figure~\ref{fig:degradation} shows that the deployed advantage often emerges
before training ends. On CheetahRun, FingerSpin, HopperHop, HopperStand, the
Walker tasks, and the manipulation tasks, \method{} separates from most
baselines by roughly mid-training. Its Hopper returns rise steadily while
competing curves remain near the floor or plateau lower, so Table~\ref{tab:core}'s
differences are not created solely by the deployed endpoints.

\paragraph{Stable and late-stage improvement.}
On BallInCup, PointMass, ReacherEasy, and ReacherHard, \method{} rises early and
remains stable; PendulumSwingup, Swimmer6, H1JoystickGait, and Op3Joystick keep
improving after mid-training. On CartpoleBalance, PandaOpenCabinet, and
H1InplaceGait, the selected rightmost marker rises above the preceding
chronological trajectory. The deployed result thus combines early acquisition,
continued gains, and checkpoint selection.

\subsection{Control component analysis}
\label{sec:mechanism}

Table~\ref{tab:mechanism-main} isolates the contribution of each component on
the separate 25-task component-study pool. Replacing noise reconstruction with direct
$u$-supervision raises normalized one-step return from $0.350$ to $0.534$ and
worst-budget return from $0.334$ to $0.508$, confirming that restoring the
boundary signal accounts for a substantial part of the recovery. This direct
repair alone reduces training AUC to $0.333$. Introducing the fixed-scale
bridge at $K=4$ raises one-step and worst-budget return further to $0.612$ and
$0.593$, while producing the highest training AUC, $0.700$. This row shows the
combined benefit of the fixed-grid bridge formulation and its fixed scale.

Budget exposure provides the next improvement. Cycling over
$\{1,4,16\}$ raises the worst-budget return to $0.635$, and the broader support
$\mathcal S$ reaches $0.668$ at one step with training AUC $0.661$.
Worst-budget checkpoint selection completes the algorithm: the full
\method{} row reaches $0.711$ at one step and $0.685$ at its weakest evaluated
budget, the strongest deployment values in the table. The ladder shows that
the components play distinct roles: boundary supervision repairs low-step
execution, the fixed-grid bridge improves the training curve, cycling supplies
budget coverage, and selection converts that coverage into a stronger
deployment floor.

\subsection{Native-pipeline validation}
\label{sec:testbed}

Table~\ref{tab:manipulation-summary} tests whether the same deployment-budget
principle transfers beyond the reconstruction-based control realization. The
released ReinFlow ShortCut pipeline starts from a behavior-cloned image policy,
uses native velocity supervision, and fine-tunes a four-step actor. It tests the
principle beyond FPO's reconstruction target and compares one multi-budget
policy with a one-step specialist and rollout-budget cycling. The released behavior-cloned prior retains moderate performance across solver
budgets, but official ReinFlow fine-tuning raises four-step success to
\TableThreeReinFlowFour{}\% while reducing one-step success to
\TableThreeReinFlowOne{}\%. The specialist reverses this tradeoff: it reaches
\TableThreeOneStepOne{}\% at one step but has a worst-budget score of only
\TableThreeOneStepWorst{}\%. Cycling performs best at two and four steps, yet
falls to \TableThreeCyclingWorst{}\% at its weakest budget. In contrast,
\method{} remains between \TableThreeArcFlowWorst{}\% and
\TableThreeArcFlowSixteen{}\% across all five budgets, giving the highest
worst-budget score (\TableThreeArcFlowWorst{}\%) and mean score
(\TableThreeArcFlowMean{}\%). Thus, its advantage is the
high performance floor obtained with one field across the deployment range.

\section{Related work}
\label{sec:related}

\textbf{Generative transport and control.}
Diffusion models learn reverse transport from noise to data
\citep{sohldickstein2015deep,ho2020ddpm,song2021sde}, while deterministic
samplers and neural ODEs expose an accuracy--computation tradeoff
\citep{song2021ddim,chen2018neuralode}. Flow matching and rectified flow learn
velocity fields directly \citep{lipman2023flow,liu2023rectified}; transport
couplings and multisample matching straighten paths, and rectified flows scale
to high-resolution generation
\citep{tong2024otcfm,pooladian2023multisample,esser2024sd3}. In control,
generative models support planning and return conditioning
\citep{janner2022diffuser,ajay2023decision}, plus visuomotor imitation across
2D, goal-conditioned, and 3D inputs
\citep{chi2023diffusionpolicy,reuss2023beso,ze2024dp3}. AdaFlow adapts
integration to action variance and FlowPolicy learns one-step consistency
\citep{hu2024adaflow,zhang2025flowpolicy}. This work instead trains one
flow field with RL for a declared set of deployment budgets.

\textbf{Reinforcement learning of generative policies.}
Trust-region and clipped policy objectives motivate many modern policy updates
\citep{schulman2015trpo,schulman2017ppo}. DDPO casts denoising as an MDP, while
DRaFT differentiates rewards through the sampler
\citep{black2024ddpo,clark2024draft}. For control, DPPO applies policy gradients
to diffusion-policy denoising transitions and QVPO uses a value-weighted update
\citep{ren2025dppo,ding2024qvpo}. ReinFlow makes flow transitions stochastic;
FPO and FPO++ form PPO-style ratios from flow losses; PolicyFlow and Flow-GRPO
derive velocity- or transition-based objectives
\citep{zhang2025reinflow,mcallister2025fpo,yi2026fpopp,yang2026policyflow,liu2025flowgrpo}.
Offline and critic-based methods include Diffusion-QL, EDP, DTQL, BDPO,
Q-score matching, Flow Q-Learning, and SAC Flow
\citep{wang2023diffusionql,kang2023edp,chen2024dtql,gao2025bdpo,psenka2024qscore,park2025fql,zhang2026sacflow}.
\method{} focuses on the return floor of one policy across solver budgets.

\textbf{Few-step and one-step policies.}
DDIM and DPM-Solver accelerate sampling numerically
\citep{song2021ddim,lu2022dpmsolver}; progressive distillation, consistency
models, and InstaFlow learn fast generators
\citep{salimans2022progressive,song2023consistency,liu2024instaflow}. Shortcut
models and MeanFlow train directly over variable or one-step generation
\citep{frans2025shortcut,geng2025meanflow}, and Consistency Policy distills a
visuomotor policy \citep{prasad2024consistencypolicy}. One-step flow policy
mirror descent, FLAME, and MFPO bring this objective to RL
\citep{chen2025fpmd,li2026flame,dong2026mfpo}. These methods optimize a
designated short sampler or fast policy; \method{} preserves one RL-trained
field and measures it at every declared $K\in\mathcal K$.

\textbf{Training-time weighting and deployment budgets.}
P2, Min-SNR, and EDM reweight diffusion times
\citep{choi2022p2,hang2023minsnr,karras2022edm}. Our problematic weight instead
follows algebraically from noise reconstruction and is zero at the one-step
boundary. Boundary supervision restores that signal, solver cycling exposes
the intended budgets, and worst-budget selection makes their return floor an
explicit selection criterion.

\section{Conclusion}
\label{sec:discussion}
\method{} aligns flow-policy learning with its deployment range through
boundary supervision, solver-budget exposure, and worst-budget selection.
Across three control domains it improves one-step performance while retaining
competitive higher-budget returns; the native ReinFlow study extends the
principle to a released image-policy pipeline and five budgets. Deployment
retains one architecture and pays only for the selected solver budget.
Component studies further separate the gains from action-aligned supervision,
budget exposure, and selection. These results support training and evaluating
generative policies over their full deployment range.

\FloatBarrier
\label{main-end}

\clearpage
\phantomsection
\label{statement-start}
\subsection*{AI use statement}
Generative AI tools were used to help develop and critique the conceptual
framework and hypotheses; formulate mathematical claims and proofs; design
synthetic environments and experimental protocols; implement, test, and edit
software; interpret results; search literature; prepare figures and tables;
and draft and edit the manuscript. AI-generated outputs were not treated as
empirical evidence: reported values come from versioned artifacts, citations
are checked against primary sources, and assumptions and proofs are stated in
the paper. Generative AI was not used to generate or clean experimental
datasets or to translate source material; qualitative or thematic data
analysis was not applicable. We reviewed the AI-assisted work and take
responsibility for the final text, claims, code, and artifacts.

\subsection*{Ethics statement}
This work studies reinforcement learning for simulated control and does not
involve human participants, personal data, or physical-robot experiments.
Like other reinforcement-learning methods, \method{} may amplify errors or
unintended incentives in a specified reward function; improved optimization
does not by itself provide safety guarantees. Applying the method to physical
or safety-critical systems would therefore require independently validated
objectives, operational constraints, monitoring, and fail-safe mechanisms.
We report task-level outcomes and the limits of the present evaluation to
support accurate interpretation of the results.

\subsection*{Reproducibility statement}
Code and artifacts are available in the anonymous repository at
\url{https://anonymous.4open.science/r/arcflow-868A}. The repository contains
the \method{} implementation, experiment configurations.
The main text and appendices specify the algorithms, solver budgets,
checkpoint-selection rule, evaluation conventions, and matched five-seed
manipulation protocol; the repository provides the experiment configurations.

\bibliography{references}
\bibliographystyle{iclr2027_conference}

\appendix
\clearpage
\FloatBarrier
\section{Limitations and scope}
\label{app:limitations}

\paragraph{Empirical scope.}
Our evidence is limited to simulated control. The common-stack study covers
DM~Control, simulated Panda manipulation, and locomotion tasks, while the
native image-policy study examines one manipulation task. These experiments do
not establish robustness to real-robot sensing and actuation, safety
constraints, distribution shift, or substantially larger policy and action
spaces. The native study also implements the deployment-budget principle with
endpoint supervision rather than the bridge objective, so it supports transfer
of the principle but is not an independent test of the same update rule.

\paragraph{Comparison and statistical scope.}
Most baselines in the common-stack benchmark are controlled adaptations to a
shared flow backbone, critic, and interaction budget; they are not
reproductions of each method's native architecture, pretraining, or published benchmark.

\paragraph{Training and selection cost.}
\method{} leaves deployment architecture and per-step solver cost unchanged,
but training is not free: the bridge realization evaluates cached probes,
cycles through several rollout budgets, and repeatedly measures validation
return at $K\in\{1,4,16\}$. In the reported control protocol, validation adds
$2.75\%$ to the environment-interaction count, in addition to neural-network
compute from the probes and multi-budget evaluations. Moreover, selection protects only the budgets it evaluates and gives no guarantee for unseen solver counts.

\paragraph{Objective and theory.}
The bridge weight is an energy-based surrogate, not the likelihood ratio of the
executed policy, and PPO-style clipping therefore has no exact trust-region
interpretation. The appendix bounds finite-solver reconstruction error for a
deterministic Euler field under Lipschitz assumptions; it does not prove
monotone improvement in environment return, and it does not directly cover the
clipped ShortCut operator used in the native pipeline. The fixed four-node
bridge, the chosen budget sets, and the associated thresholds are empirical
design choices. Whether the same choices remain effective with adaptive or
higher-order solvers, continuous budget ranges, longer horizons, or other
generative-policy parameterizations remains open.

\section{Implementation details}
\label{app:details}

Algorithm~\ref{alg:arcflow} and the complete bridge-based control realization
appear in Section~\ref{sec:certified}. This supplement gives the mathematical
details, baseline scope, energy-scale diagnostic, and exact native-anchored
manipulation update.

\subsection{Native-objective manipulation update}
\label{app:manipulation-algorithm}
\label{app:testbed-graft}

The configuration collects with the released
ReinFlow ShortCut actor at four steps. Its extra supervision evaluates
the same actor at $\mathcal V=\{1,2,4,8,16\}$ using the stored rollout
noise and actions. ShortCut retains its native time and duration inputs;
the reverse-time field in the bridge realization has only time. The following
specifies
the actual manipulation update, including its distinction from the bridge
loss of Eq.~\ref{eq:arcflow-objective}.

\paragraph{Endpoint views and advantage weights.}
For each captured minibatch $B$ of $(s_i,z_i,a_i,\widehat A_i)$,
$z_i$ and $a_i$ are the initial and final stored chain states. Let
$E_{K,\alpha}(\theta;z,s)$ be the released ShortCut field integrated on
$t_j=(j/K)^\alpha$, using its native forward-time convention, duration
$t_{j+1}-t_j$, and action clipping after every step. The trained views
use $\alpha=1$. Within each minibatch, define normalized empirical
midrank weights
\begin{equation}
 r_i=\sum_{j\in B}\mathbf1\{\widehat A_j<\widehat A_i\}
 +\tfrac12\left(1+\sum_{j\in B}\mathbf1\{\widehat A_j=\widehat A_i\}\right),
 \qquad w_i=r_i\Big/\sum_{j\in B}r_j.
\end{equation}
The endpoint objective is
\begin{equation}
 L_K^{\rm end}(\theta)=\sum_B\sum_{i\in B}w_i
 \|E_{K,1}(\theta;z_i,s_i)-\operatorname{stopgrad}(a_i)\|_{2,d}^2.
 \label{eq:manipulation-endpoint-loss}
\end{equation}
The loss sums the normalized minibatch losses. All $15$ accumulation
minibatches on \texttt{can} contribute to
every view gradient. This matches the native update's accumulation
window. Rank weights favor high-advantage actions and handle ties with
their average rank; they are not signed policy-gradient coefficients.

\paragraph{Native anchor and direction.}
Let $g_0$ be the negative of the accumulated native actor gradient
(policy loss plus the release's entropy and behavior-cloning terms), and
$g_K=-\nabla L_K^{\rm end}$. Normalize these to unit vectors $u_0,u_K$.
The small view-space solve is
\begin{equation}
 p^\star\in\arg\max_{\|p\|\le1,\ \langle p,u_0\rangle\ge0}
             \min_{K\in\mathcal V}\langle p,u_K\rangle.
 \label{eq:manipulation-maxmin}
\end{equation}
It is implemented through the simplex-plus-ray dual
$\min_{\beta\in\Delta,\lambda\ge0}\|\sum_K\beta_Ku_K+\lambda u_0\|$,
using a float64 Gram matrix and active-set enumeration. The applied
direction uses a fixed equal blend,
\begin{equation}
 d=\|g_0\|\frac{\tfrac12u_0+\tfrac12p^\star}
                      {\|\tfrac12u_0+\tfrac12p^\star\|}.
 \label{eq:manipulation-anchor}
\end{equation}
A degenerate max--min solution falls back to $d=g_0$; zero or non-finite
input gradients are rejected by the implementation. The norm floor is
$10^{-12}$, feasibility tolerance $10^{-8}$, and cone-apex tolerance
the square root of the input dtype's machine epsilon. The equal blend
is an explicit native anchor, applied after the max--min solve.

\paragraph{Step acceptance.}
Replace the actor's gradient by $-d$ and call the native AdamW step to
obtain a candidate parameter state $\theta_c$ and optimizer state.
Try $\theta_f=\theta_0+f(\theta_c-\theta_0)$ in descending order
$f\in\{1,1/2,1/4,1/8,0\}$. Accept the first finite candidate for which
every $L_K^{\rm end}$ increases by at most $10^{-7}$ from its pre-step value
on the captured minibatches and the maximum old-to-candidate common-noise
endpoint RMSE is at most
$0.25+10^{-7}$. The displacement check uses the first $32$ stored
noise--state pairs and all $15$ views
$\mathcal V\times\{\alpha=1,2,4\}$. The view losses use only the
native-clock views; the clock-warped views constrain displacement.

Actor optimizer moments interpolate between the old and candidate
moments with the accepted fraction $f$. A positive fraction retains
the candidate step counter; $f=0$ restores the old actor state completely.
The critic's native update is unchanged. Acceptance reuses training
advantages and samples and makes no additional reward queries; it is a
finite-step surrogate check, not an independent return certificate.
The separate cycling control uses the native update and cycles
$K\in\{1,2,4\}$ at iteration boundaries, with fixed-$K$ validation.

\FloatBarrier
\section{Mathematical basis}
\label{app:theory}

The statements below concern the deterministic reverse Euler field in
Section~\ref{sec:method}. They identify the supervised deployment boundary
and bound action reconstruction under explicit regularity assumptions.
They do not assert monotone environment return or transfer directly to
the clipped ShortCut operator in Appendix~\ref{app:manipulation-algorithm}.

\subsection{Boundary non-coercivity}

\begin{proposition}[Boundary non-coercivity]
\label{prop:bridge-noncoercive-app}
Fix a scalar velocity error $c$ and let
$b_\delta(t)=c\max\{0,(t-(1-\delta))/\delta\}$. This perturbation is continuous,
has one-step boundary error $b_\delta(1)^2=c^2$, and has continuous
noise-reconstruction risk
$\int_0^1(1-t)^2b_\delta(t)^2\,dt=c^2\delta^3/30$. Thus arbitrarily small
continuous surrogate risk does not control the deployed one-step endpoint error,
even over continuous fields.
\end{proposition}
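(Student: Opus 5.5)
The plan is to verify the three claimed properties of $b_\delta$ by direct computation and then draw the non-coercivity conclusion from the scaling in $\delta$. I would treat $b_\delta$ as the velocity error $v_\theta-u$ along the bridge, so that by Eq.~\ref{eq:eps-weight} its continuous noise-reconstruction risk is $\int_0^1(1-t)^2 b_\delta(t)^2\,dt$. By Proposition~\ref{prop:bridge-one-step}, the deployed one-step error is governed by the same velocity error evaluated at the boundary $t=1$.

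First, I would check continuity and the boundary value. For $0<\delta\le 1$, the function $t\mapsto (t-(1-\delta))/\delta$ is affine, so its positive part is continuous. Hence $b_\delta$ is continuous on $[0,1]$: it vanishes on $[0,1-\delta]$ and rises linearly to $b_\delta(1)=c$. This gives $b_\delta(1)^2=c^2$ for every $\delta$.

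Second, I would compute the risk. Only $[1-\delta,1]$ contributes. Substituting $s=1-t\in[0,\delta]$ gives $t-(1-\delta)=\delta-s$, so the integral becomes
\[
\frac{c^2}{\delta^2}\int_0^\delta s^2(\delta-s)^2\,ds .
\]
Rescaling $s=\delta\sigma$ turns this into $c^2\delta^3\int_0^1\sigma^2(1-\sigma)^2\,d\sigma$. The remaining integral is the Beta value $B(3,3)=2!\,2!/5!=1/30$, which yields $c^2\delta^3/30$.

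Finally, I would conclude. For fixed $c\neq0$, letting $\delta\to0$ drives the surrogate risk to zero at rate $\delta^3$, while the boundary error stays at $c^2$. Therefore no bound of the form $b(1)^2\le C\int_0^1(1-t)^2 b(t)^2\,dt$ can hold over continuous fields. This remains true even with an additive slack that vanishes as the risk vanishes.

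There is no real obstacle here. The only point needing care is interpretive: the perturbation must be read as the velocity error that enters both Eq.~\ref{eq:eps-weight} and the one-step identity of Proposition~\ref{prop:bridge-one-step}. I would state this identification explicitly, noting for example that a realizable field is obtained by adding $b_\delta(t)$ times a fixed unit direction to an exact field.
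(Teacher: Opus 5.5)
Your proposal is correct and follows the paper's proof: both use an affine change of variables on $[1-\delta,1]$ to reduce the risk to $c^2\delta^3\int_0^1 q^2(1-q)^2\,dq=c^2\delta^3/30$, and both read off $b_\delta(1)^2=c^2$ directly. The differences are cosmetic: you reflect and then rescale and evaluate the integral as $B(3,3)$, whereas the paper substitutes $q=(t-(1-\delta))/\delta$ in one step. Your explicit restriction to $0<\delta\le1$ and your remark that no bound $b(1)^2\le\phi(\text{risk})$ with $\phi(r)\to0$ can hold are useful clarifications that the paper leaves implicit.
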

\emph{Proof.} Substitute $q=(t-(1-\delta))/\delta$ on $[1-\delta,1]$:
$\int_0^1(1-t)^2b_\delta(t)^2\,dt
=c^2\delta^3\int_0^1q^2(1-q)^2\,dq=c^2\delta^3/30$,
while $b_\delta(1)^2=c^2$. \qed

\subsection{Finite-solver endpoint bound}

\begin{proposition}[Finite-solver endpoint bound]
\label{prop:bridge-endpoint-bound}
With $x^\star_j$ the bridge nodes, $\widehat x_j$ the Euler trajectory from
the same source, $r_j=v_\theta(x^\star_j,t_j\mid s)-u$, and $v_\theta$
$L$-Lipschitz in its state argument uniformly over the grid times,
\begin{equation}
 \|\widehat x_K-a\|
 \le h\sum_{j=0}^{K-1}(1+hL)^{K-1-j}\|r_j\|,
 \quad
 \|\widehat x_K-a\|^2
 \le h^2\!\left[\sum_j(1+hL)^{2(K-1-j)}\right]
          \left[\sum_j\|r_j\|^2\right].
 \label{eq:bridge-endpoint-bound}
\end{equation}
\end{proposition}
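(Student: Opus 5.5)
We set up the comparison as a standard discrete Gr\"onwall (global truncation error) argument for reverse Euler. The reference trajectory is the bridge itself: with $t_j=1-j/K$ and $h=1/K$, the nodes $x^\star_j=\widetilde x_{t_j}=(1-t_j)a+t_j\widetilde z$ satisfy $x^\star_0=\widetilde z$ and $x^\star_K=a$. Because the bridge is straight with constant velocity $u=\widetilde z-a$, it obeys the exact recursion
\[
x^\star_{j+1}=x^\star_j-hu .
\]
The Euler trajectory from the same source is $\widehat x_0=\widetilde z$ and $\widehat x_{j+1}=\widehat x_j-hv_\theta(\widehat x_j,t_j\mid s)$. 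Define the error $e_j=\widehat x_j-x^\star_j$, so that $e_0=0$ and $e_K=\widehat x_K-a$.

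Subtracting the two recursions gives
\[
e_{j+1}=e_j-h\bigl[v_\theta(\widehat x_j,t_j\mid s)-v_\theta(x^\star_j,t_j\mid s)\bigr]-hr_j .
\]
The bracket is controlled by the Lipschitz assumption: its norm is at most $L\|e_j\|$. The triangle inequality then yields
\[
\|e_{j+1}\|\le(1+hL)\|e_j\|+h\|r_j\| .
\]
Unrolling this from $e_0=0$ by induction on $j$ gives
\[
\|e_K\|\le h\sum_{j=0}^{K-1}(1+hL)^{K-1-j}\|r_j\|,
\]
which is the first inequality. The second follows from Cauchy--Schwarz applied to the sum $\sum_j\bigl[(1+hL)^{K-1-j}\bigr]\|r_j\|$, viewed as an inner product of the two sequences, followed by squaring.

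The only delicate points are bookkeeping, not analysis. First, the norm must be consistent. The Lipschitz constant and both sides should use the same norm; the result holds verbatim for the Euclidean norm, and also for $\|\cdot\|_{2,d}$ since it is a fixed rescaling of it. Second, $L$ must be uniform over the grid times $t_j$, as the statement assumes. As a sanity check, for $K=1$ the bound reduces to $\|\widehat x_1-a\|\le h\|r_0\|=\|r_0\|$. This agrees with Proposition~\ref{prop:bridge-one-step}, where it holds with equality.
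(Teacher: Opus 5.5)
Your proposal is correct and matches the paper's proof: the Lipschitz step gives $\|e_{j+1}\|\le(1+hL)\|e_j\|+h\|r_j\|$, induction from $e_0=0$ yields the first bound, and Cauchy--Schwarz gives the squared form. You also make explicit what the paper leaves implicit: the straight bridge satisfies $x^\star_{j+1}=x^\star_j-hu$ exactly, so $r_j$ is precisely the per-step defect that drives the error recursion.
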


Let $e_j=\widehat x_j-x_j^\star$, $h=1/K$, and
$r_j=v_\theta(x_j^\star,t_j\mid s)-u$. Lipschitz continuity gives
\begin{equation}
 \|e_{j+1}\|\le(1+hL)\|e_j\|+h\|r_j\|.
\end{equation}
With $e_0=0$, induction yields
\begin{equation}
 \|e_K\|\le h\sum_{j=0}^{K-1}(1+hL)^{K-1-j}\|r_j\|.
\end{equation}
Cauchy--Schwarz gives the squared form. \qed

\subsection{One-step identity and local gradient}

On the fixed reference grid $G_4$, $h=1/4$, $t_0=1$, and the boundary query is
$\widetilde x_{t_0}=\widetilde z$. Substituting $u=\widetilde z-a$ gives
\begin{align}
 h^{-1}\ell_0(\theta)
 &=\|v_\theta(\widetilde z,1\mid s)-(\widetilde z-a)\|_{2,d}^2\\
 &=\|\widetilde z-v_\theta(\widetilde z,1\mid s)-a\|_{2,d}^2.
\end{align}
The deployed one-step reverse-Euler endpoint from the same source is
$F^{\mathrm{Euler}_1}_\theta(\widetilde z,s)=
\widetilde z-v_\theta(\widetilde z,1\mid s)$, proving
Proposition~\ref{prop:bridge-one-step}.

Write $\widehat L_\theta=M^{-1}\sum_m\ell_m(\theta)$,
$\widehat\Delta_\theta=\widehat L_{\bar\theta}-\widehat L_\theta$, and
$\widehat w_\theta=\exp\{\operatorname{clip}(\widehat\Delta_\theta,-c_E,c_E)\}$.
At $\theta=\bar\theta$, $\widehat\Delta=0$, $\widehat w=1$, and, because the
inner cap is locally inactive,
$\nabla\widehat w=-\nabla\widehat L$. Away from the outer clip kink, the actor
objective therefore has derivative $-\widehat A\nabla\widehat L$. Taking
expectation establishes Eq.~\ref{eq:arcflow-local-gradient}. This derivation
requires common probes: independent old and new probes would make the sampled
energy difference nonzero even at identical parameters.

\begin{proposition}[Normalized local-policy interpretation]
\label{prop:normalized-tilt}
Let $L_\theta(s,a)=\mathbb E_{\widetilde z,j}[\ell_j(\theta)]$ and define
the normalized local policy
\begin{equation}
 q_\theta(a\mid s)=
 \frac{\pi_{\bar\theta}(a\mid s)
 \exp\{L_{\bar\theta}(s,a)-L_\theta(s,a)\}}
 {Z_\theta(s)},\quad
 Z_\theta(s)=\mathbb E_{a\sim\pi_{\bar\theta}}
 \exp\{L_{\bar\theta}-L_\theta\}.
 \label{eq:normalized-energy-tilt}
\end{equation}
Then $q_{\bar\theta}=\pi_{\bar\theta}$ and, for any integrable,
$\theta$-independent $A(s,a)$,
\begin{equation}
 \left.\nabla_\theta\mathbb E_{a\sim q_\theta}[A]\right|_{\bar\theta}
 =-\mathbb E_{a\sim\pi_{\bar\theta}}
 \left[(A-\mathbb E[A\mid s])\nabla_\theta L_\theta\right]_{\bar\theta},
 \label{eq:normalized-tilt-gradient}
\end{equation}
so with a conditionally centered advantage,
Eq.~\ref{eq:arcflow-local-gradient} is the exact first-order policy gradient
of a normalized exponential tilt of the behavior policy. The finite clipped
weight in Eq.~\ref{eq:arcflow-weight} inherits no exact likelihood-ratio or
trust-region interpretation.
\end{proposition}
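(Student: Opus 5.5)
The plan is a direct differentiation of the normalized tilt at $\theta=\bar\theta$, after which the claimed identification with Eq.~\ref{eq:arcflow-local-gradient} is read off.

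First, write $D_\theta(s,a)=L_{\bar\theta}(s,a)-L_\theta(s,a)$. At $\theta=\bar\theta$ we have $D_{\bar\theta}\equiv0$, so the numerator of Eq.~\ref{eq:normalized-energy-tilt} equals $\pi_{\bar\theta}$. Likewise $Z_{\bar\theta}(s)=\mathbb E_{\pi_{\bar\theta}}[1]=1$, which gives $q_{\bar\theta}=\pi_{\bar\theta}$ immediately.

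Second, express the objective as a reweighted expectation under the fixed behavior policy:
\[
\mathbb E_{a\sim q_\theta}[A]=\frac{\mathbb E_{a\sim\pi_{\bar\theta}}[A\,e^{D_\theta}]}{\mathbb E_{a\sim\pi_{\bar\theta}}[e^{D_\theta}]}.
\]
The sampling measure no longer depends on $\theta$. We then differentiate with the quotient rule, using $\nabla_\theta e^{D_\theta}=-e^{D_\theta}\nabla_\theta L_\theta$. At $\bar\theta$ both exponentials equal one and the denominator equals one. The derivative therefore becomes
\[
-\mathbb E[A\nabla L]+\mathbb E[A\mid s]\,\mathbb E[\nabla L\mid s],
\]
with expectations taken conditionally on $s$ under $\pi_{\bar\theta}$. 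Pulling the $s$-measurable factor $\mathbb E[A\mid s]$ inside the expectation gives $-\mathbb E[(A-\mathbb E[A\mid s])\nabla L]$, which is Eq.~\ref{eq:normalized-tilt-gradient}. If the statement is read with an outer expectation over $s$, we apply this conditional identity inside it.

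Third, for a conditionally centered advantage, $\mathbb E[A\mid s]=0$, so the right side reduces to $-\mathbb E[A\nabla L_\theta]_{\bar\theta}$. This matches Eq.~\ref{eq:arcflow-local-gradient}, taking $\widehat L$ in expectation over the common probes. The closing remark about the clipped weight needs no proof beyond noting that $q_\theta$ is distinct from the deployed solver policy.

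The main obstacle is justifying the interchange of gradient and expectation. I would assume $L_\theta$ is differentiable in $\theta$ with $e^{|D_\theta|}(1+|A|)\|\nabla L_\theta\|$ dominated by an integrable function in a neighborhood of $\bar\theta$; integrability of $A$ alone does not suffice. Given smoothness of $v_\theta$ and Gaussian $\widetilde z$, such a bound is plausible locally, and dominated convergence then yields the interchange. I would state this regularity explicitly as a hypothesis rather than derive it.
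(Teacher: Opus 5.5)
Your proof is correct and is essentially the paper's argument. The paper differentiates $\log q_\theta=\log\pi_{\bar\theta}+L_{\bar\theta}-L_\theta-\log Z_\theta$ and uses $\nabla\log Z_\theta|_{\bar\theta}=-\mathbb E_{\pi_{\bar\theta}}[\nabla L\mid s]$, which is the same computation as your quotient rule on $\mathbb E_{\pi_{\bar\theta}}[Ae^{D_\theta}]/\mathbb E_{\pi_{\bar\theta}}[e^{D_\theta}]$; your explicit domination hypothesis is a worthwhile addition, since the paper exchanges $\nabla_\theta$ and the expectation without stating any regularity beyond integrability of $A$.
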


For Proposition~\ref{prop:normalized-tilt}, differentiate
\begin{equation}
 \log q_\theta(a\mid s)=\log\pi_{\bar\theta}(a\mid s)
 +L_{\bar\theta}(s,a)-L_\theta(s,a)-\log Z_\theta(s).
\end{equation}
At $\bar\theta$, $q=\pi_{\bar\theta}$ and
$\nabla\log Z=-\mathbb E_{\pi_{\bar\theta}}[\nabla L]$. Hence
\begin{align}
 \left.\nabla\mathbb E_{q_\theta}[A]\right|_{\bar\theta}
 &=\mathbb E_{\pi_{\bar\theta}}
 \left[A\{-\nabla L+\mathbb E_{\pi_{\bar\theta}}[\nabla L\mid s]\}\right]\\
 &=-\mathbb E_{\pi_{\bar\theta}}
 \left[(A-\mathbb E[A\mid s])\nabla L\right],
\end{align}
which proves Eq.~\ref{eq:normalized-tilt-gradient}.

\paragraph{Energy scale.}
For $\ell=wD$, write $\widehat\Delta_D$ for the probed old-minus-new residual
difference. The local update is proportional to
$-w\,\mathbb E[\widehat A\nabla D]$, while the uncapped likelihood-like
weight is $\exp(w\widehat\Delta_D)$. Thus changing $w$ changes both
the local gradient scale and when the numerical ratio reaches the PPO
clipping thresholds. A fixed $w=1/4$ keeps that scale independent of the
collecting budget. The matched-scale controls isolate this choice in
Appendix~\ref{app:node-placement}.

\subsection{Cross-budget bound}

For $G_K=\{1-j/K:j=0,\ldots,K-1\}$ and
$r(t)=v_\theta(x_t^\star,t\mid s)-u$, define
\begin{equation}
 R_K=\left(\frac1K\sum_{t\in G_K}\|r(t)\|^2\right)^{1/2},\qquad
 \delta_{K,K'}=\max_{t'\in G_{K'}}\min_{t\in G_K}|t'-t|.
\end{equation}

\begin{proposition}[Cross-budget endpoint control]
\label{prop:bridge-cross-budget}
Assume the hypotheses of Proposition~\ref{prop:bridge-endpoint-bound} for a
$K'$-step deployment and additionally that $r(t)$ is $H$-Lipschitz along the
analytic bridge. Then, with $h'=1/K'$,
\begin{equation}
 \|\widehat x_{K'}-a\|
 \le C_{L,K'}\bigl(\sqrt K R_K+H\delta_{K,K'}\bigr),\quad
 C_{L,K'}=h'\sum_{j=0}^{K'-1}(1+h'L)^{K'-1-j}.
 \label{eq:cross-budget-bound}
\end{equation}
Here $\delta_{4,1}=0$ and $\delta_{4,16}=3/16$.
\end{proposition}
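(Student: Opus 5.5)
Apply Proposition~\ref{prop:bridge-endpoint-bound} on the deployment grid $G_{K'}$ with step $h'=1/K'$. This gives
\[
\|\widehat x_{K'}-a\|\le h'\sum_{j=0}^{K'-1}(1+h'L)^{K'-1-j}\|r(t'_j)\|,
\]
where $t'_j=1-j/K'$ and $r(t'_j)=v_\theta(x^\star_{t'_j},t'_j\mid s)-u$ are the bridge residuals at the deployment nodes. The hypotheses are exactly those assumed for a $K'$-step deployment: the Euler trajectory and the bridge nodes start from the same source $x^\star_1=\widetilde z$, and $v_\theta$ is $L$-Lipschitz in state at the grid times. The prefactor $h'\sum_j(1+h'L)^{K'-1-j}$ is $C_{L,K'}$. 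So it suffices to prove a uniform bound
\[
\max_{t'\in G_{K'}}\|r(t')\|\le \sqrt K R_K+H\delta_{K,K'},
\]
which then factors out of the sum.

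The uniform bound has two steps.

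\emph{Transfer to the training grid.} For each deployment node $t'\in G_{K'}$, pick a nearest training node $t\in G_K$, so that $|t'-t|\le\delta_{K,K'}$ by the definition of $\delta_{K,K'}$. Since $r$ is $H$-Lipschitz along the analytic bridge,
\[
\|r(t')\|\le\|r(t)\|+H\delta_{K,K'}.
\]

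\emph{Max by root-mean-square.} Bound a single term by the full sum:
\[
\|r(t)\|\le\Bigl(\sum_{t\in G_K}\|r(t)\|^2\Bigr)^{1/2}=\sqrt K\,R_K,
\]
using the normalization $R_K^2=K^{-1}\sum_{t\in G_K}\|r(t)\|^2$.

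Combining the two steps and inserting the result into the endpoint bound gives Eq.~\ref{eq:cross-budget-bound}.

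It remains to compute the two quoted values.
\begin{itemize}
\item $G_1=\{1\}\subset G_4$, so every deployment node is itself a training node and $\delta_{4,1}=0$.
\item For $K'=16$, the deployment nodes are $1-j/16$ for $j=0,\ldots,15$, and the training nodes are $\{1,3/4,1/2,1/4\}$. Nodes lying between two training nodes are within $1/8$ of one of them. The worst case is the lower tail below $1/4$: the smallest deployment node is $1/16$, whose nearest training node is $1/4$, at distance $3/16$. Hence $\delta_{4,16}=3/16$.
\end{itemize}

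The only delicate point is the Lipschitz transfer of $r$ in the first step. It relies on the assumption that $r$ is $H$-Lipschitz as a function of $t$ along $x^\star_t$, which is what makes comparing residuals at different times legitimate; this is assumed outright rather than derived from regularity of $v_\theta$. Everything else is bookkeeping. The factor $\sqrt K$ is crude, since a maximum is bounded by an $\ell_2$ sum, but that is what the stated bound requires.
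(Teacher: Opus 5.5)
Your proposal is correct and follows the paper's proof essentially step for step: you pick a nearest training node for each deployment node, transfer via the $H$-Lipschitz hypothesis on $r$, bound a single training-grid residual by $\sqrt K R_K$, and substitute the resulting uniform bound into the $K'$-step Euler endpoint estimate to obtain $C_{L,K'}$. You also explicitly verify $\delta_{4,1}=0$ and $\delta_{4,16}=3/16$, which the paper states without checking.
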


For each $t'\in G_{K'}$, choose a nearest $t\in G_K$. The definition of
$R_K$ implies $\|r(t)\|\le\sqrt K R_K$, while time regularity gives
$\|r(t')-r(t)\|\le H\delta_{K,K'}$. Thus every deployment-grid residual is at
most $\sqrt K R_K+H\delta_{K,K'}$. Substitution into
Eq.~\ref{eq:bridge-endpoint-bound}, with $h'=1/K'$, proves
Proposition~\ref{prop:bridge-cross-budget}. Large $H$ or a shifted reference
bridge makes the bound loose, and it is not a reward theorem. Throughout we keep
Euler fixed: the endpoint results assert no invariance to changes of flow
geometry, and state which residuals control one specified finite map.

\FloatBarrier
\section{Experimental protocol and baselines}
\label{app:baseline-audit}

\paragraph{Control training and checkpoint protocol.}
The control run uses $64$ outer iterations, or eight passes through
$\mathcal S$, with eight cached bridge probes per action. It evaluates the
initialization and each cycle-end checkpoint at $\mathcal K=\{1,4,16\}$.
At each checkpoint and budget, $64$ evaluation episodes split into $32$
for selection and $32$ for disjoint reporting. The selection half applies
Eq.~\ref{eq:checkpoint-selection}; exact ties choose the earliest checkpoint.
The report half cannot affect selection and supplies the selected checkpoint's
reported return. On tasks with $1{,}000$-step episodes, the nine evaluations
add $1{,}728{,}000$ environment steps, or $2.75\%$ of the
$62{,}914{,}560$-interaction training budget. This is an empirical deployment
rule, not a formal robustness certificate.

\paragraph{Common-stack comparison.}
FPO is the control backbone. FPO++, DPPO, ReinFlow, and PolicyFlow
are adaptations to the same architecture, initialization, critic, and
interaction budget. Their official releases use other tasks and training
regimes. DPPO originally trains a diffusion policy: it treats stochastic
denoising transitions as an inner MDP and applies policy gradients to their
likelihood ratios \citep{ren2025dppo}. ReinFlow compares against that
diffusion-policy implementation using separately pretrained actors
\citep{zhang2025reinflow}. Our row labeled DPPO instead retains the shared
flow velocity field and applies the denoising-MDP objective to Gaussian-noised
Euler transitions,
$z_{j+1}\mid z_j,o\sim\mathcal N(z_j+\Delta t_j v_\theta(z_j,t_j,o),
\sigma_j^2I)$; the PPO ratios use the corresponding transition log densities.
Thus it tests the DPPO update on a flow backbone, rather than reproducing
the original diffusion actor or its published results. The DM~Control
comparison starts every arm from a shared random initialization, whereas the
original DPPO and ReinFlow studies fine-tune behavior priors. The native
image-based ReinFlow study is
therefore reported separately. The following distinctions define the
baseline rows, rather than implying a reproduction of each paper's native
benchmark results. Table~\ref{tab:baseline-audit} summarizes these adaptations.

\begin{table}[ht]
\centering\small
\caption{Implementation scope of the shared-backbone baselines.}
\label{tab:baseline-audit}
\begin{tabular}{p{0.16\textwidth}p{0.76\textwidth}}
\toprule
Lineage & Adaptation and audited controls \\
\midrule
FPO & Noise-reconstruction surrogate; four-step collection. \\
FPO++ & Per-probe exponential ratios; the shared PPO objective replaces
the release's ASPO objective. \\
DPPO (flow adaptation) & Gaussian-noised Euler transitions and
denoising-MDP likelihood ratios; corrected coordinate reduction and
denoising discount. Trained from scratch on the shared flow backbone. \\
ReinFlow & Stochastic-chain exploration model with the release-style
noise-ceiling schedule restored; shared PPO clipping and backbone. \\
PolicyFlow & Velocity-based surrogate with audited Brownian regularization,
entropy, and optimizer settings. \\
\bottomrule
\end{tabular}
\end{table}

\FloatBarrier
\section{Additional controls}
\label{app:controls}

Table~\ref{tab:mechanism-main} reports the current component ladder. This
appendix adds the collecting-grid isolation used to distinguish energy scale
from node placement.

\subsection{Energy scale and node placement}
\label{app:node-placement}
The collecting-grid controls further separate the time nodes from their
weight. Using weight $1/K$ makes the energy scale depend on the collecting
solver; holding it at $1/4$ removes that dependence.
\begin{figure}[ht]
\centering
\includegraphics[width=\textwidth]{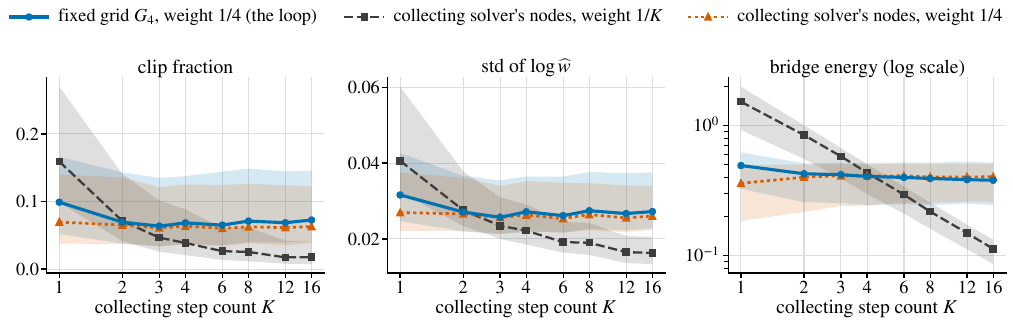}
\caption{\footnotesize \textbf{A fixed energy scale stabilizes the numerical
weight across collecting budgets.} Clip fraction, log-weight spread, and
bridge energy versus collecting $K$; medians and quartiles over the
selection-pool iterations. The collecting-grid $1/K$ weight changes with
budget; both fixed-$1/4$ controls keep the energy scale comparable.}
\label{fig:tilt}
\end{figure}

Figure~\ref{fig:tilt} shows that weight $1/K$ makes the bridge energy fall as
the collecting budget increases, whereas the fixed-$1/4$ variants keep bridge
energy and log-weight spread on comparable scales. This is a scale diagnostic;
it does not by itself compare episodic return.

\end{document}